\documentclass[10pt, journal]{IEEEtran}
\IEEEoverridecommandlockouts
\usepackage{cite}
\usepackage{amsmath,amssymb,amsfonts}
\usepackage{amsthm}

\usepackage{algorithmic}
\usepackage{graphicx}
\usepackage{textcomp}
\usepackage{hyperref}
\usepackage{multirow}
\usepackage{xcolor}
\usepackage{algorithm}
\usepackage{adjustbox}
\usepackage{algorithmic}
\usepackage{pifont}
\usepackage[table,xcdraw]{xcolor}
\usepackage{bbding}
\usepackage{tikz}
\usepackage{xcolor}

\usetikzlibrary{shapes.geometric, arrows}
\def\BibTeX{{\rm B\kern-.05em{\sc i\kern-.025em b}\kern-.08em
    T\kern-.1667em\lower.7ex\hbox{E}\kern-.125emX}}
\begin{document}

\makeatletter
\newcommand{\linebreakand}{%
  \end{@IEEEauthorhalign}
  \hfill\mbox{}\par
  \mbox{}\hfill\begin{@IEEEauthorhalign}
}
\makeatother

\hyphenation{op-tical net-works semi-conduc-tor IEEE-Xplore}

\newtheorem{theorem}{Theorem}
\newtheorem{lemma}{Lemma}
\newtheorem{definition}{Definition}
\newtheorem{corollary}{Corollary}
\newtheorem{prop}{Proposition}
\newtheorem{assumption}{Assumption}

\title{Big2Small: A Unifying Neural Network Framework for Model Compression\\
}

\author{Jing-Xiao Liao, \textit{Member, IEEE}, Haoran Wang, Tao Li, Daoming Lyu, \\ Yi Zhang,~\textit{Senior Member, IEEE}, Chengjun Cai, \textit{Member, IEEE}, Feng-Lei Fan, \textit{Senior Member, IEEE}

\thanks{\textit{(Feng-Lei Fan is the corresponding author.)}}

\thanks{Jing-Xiao Liao, Haoran Wang, and Feng-Lei Fan are with Frontier of Artificial Networks (FAN) Lab, Department of Data Science, City University of Hong Kong, Kowloon, Hong Kong SAR, China (email: jingxiao.liao@cityu.edu.hk, fenglfan@cityu.edu.hk).}
\thanks{Tao Li is with Hong Kong Applied Science and Technology Research Institute (ASTRI) (email: litao@astri.org).}
\thanks{Daoming Lyu is with Sichuan Engineering Research Center for Big Data Visual Analytics, Sichuan University of Science and Engineering, Zigong, China (email: daoming.lyu@suse.edu.cn)}
\thanks{Yi Zhang is with the School of Cyber Science and Engineering, Sichuan University, Chengdu, China, and also with the Key Laboratory of Data Protection and Intelligent Management, Ministry of Education, Sichuan University, Chengdu, China (email: yzhang@scu.edu.cn)}
\thanks{Chengjun Cai is with Computer Science and Information Technology Center, City University of Hong Kong (Dongguan), Dongguan, China (email: chengjun.cai@cityu-dg.edu.cn)}
}








\maketitle

\begin{abstract}
With the development of foundational models, model compression has become a critical requirement. Various model compression approaches have been proposed such as low-rank decomposition, pruning, quantization, ergodic dynamic systems, and knowledge distillation, which are based on different heuristics. To elevate the field from fragmentation to a principled discipline, we construct a unifying mathematical framework for model compression grounded in measure theory. We further demonstrate that each model compression technique is mathematically equivalent to a neural network subject to a regularization. Building upon this mathematical and structural equivalence, we propose an experimentally-verified data-free model compression framework, termed \textit{Big2Small}, which translates Implicit Neural Representations (INRs) from data domain to the domain of network parameters. \textit{Big2Small} trains compact INRs to encode the weights of larger models and reconstructing the weights during inference. To enhance reconstruction fidelity, we introduce Outlier-Aware Preprocessing to handle extreme weight values and a Frequency-Aware Loss function to preserve high-frequency details. Experiments on image classification and segmentation demonstrate that \textit{Big2Small} achieves competitive accuracy and compression ratios compared to state-of-the-art baselines.

\end{abstract}


\begin{IEEEkeywords}
A Unifying model compression theory, implicit neural representation, post-training compression
\end{IEEEkeywords}

\section{Introduction}

\IEEEPARstart{W}{ith} the development of foundational models, serving a neural network in a resource-constrained setting becomes increasingly challenging. To solve this problem, model compression has emerged as a critical research direction, which reduces a network's size and computational complexity without much compromising its performance~\cite{zhu2024survey,11359594}. Model compression can be applied during or after model training, which are referred to as compression-aware training~\cite{baskin2021cat} and post-training compression~\cite{kwon2022fast}, respectively. Compared to compression-aware training, post-training compression is more favored by end users who neither afford the cost of model training nor have expertise on model training. To better serve end users, the holy grail of post-training compression is to realize so-called ``what you compress is what you train''. Currently, to the best of our knowledge, there are five important classes of post-training model compression methods: low-rank decomposition, quantization, pruning, ergodic dynamic systems, and knowledge distillation.

{\small$\bullet$} \textit{Low-rank decomposition}: Low-rank decomposition approximates weight matrices in dense and embedding layers with a product of smaller matrices~\cite{hsulanguage}. For instance, instead of storing a large weight matrix of size $m \times n$, we can factor it into two narrower matrices of sizes $m \times k$ and $k \times n$, where $k$ is much smaller than $m$ or $n$ ($k \ll \min(m,n)$). This restructuring dramatically reduces the number of parameters while preserving the network's ability to represent complex features, leading to a more compact and efficient model.

{\small$\bullet$} \textit{Quantization}: Quantization compresses a network by converting high-precision parameters and activation values into lower-bit representations~\cite{rokh2023comprehensive}. This technique systematically reduces the numerical precision of both weights (learned parameters) and activations (neuron outputs) through bit-width reduction, such as turning 32-bit floating-point numbers into efficient 8-bit integers. The process significantly decreases memory requirements and accelerates inference while maintaining functional accuracy.

{\small$\bullet$} \textit{Pruning}: Neural network pruning introduces sparsity by selectively eliminating less critical components—including synapses, neurons, layers, or entire architectural blocks~\cite{he2023structured}. Therefore, a plethora of research in this field focuses on developing specialized evaluation metrics to identify which elements of a network hold minimal importance.

{\small$\bullet$} \textit{Ergodic dynamic systems}: Ergodic dynamic systems use a low-dimensional dynamic system to approximate a high-dimensional space, thereby turning a high-dimensional parameter vector into a low-dimensional vector to realize the reduction in memory~\cite{fan2024hyper,cvitanovic1992circle}. Common dynamic systems are irrational windings, chaotic mapping, and so on.

{\small$\bullet$} \textit{Knowledge Distillation}: Knowledge distillation (KD) transfers learned capabilities from a large, complex model to a smaller, more efficient model~\cite{xu2024survey}. This technique trains a student model not only on original training data labels but also on the teacher's softened output probabilities (logits), which contain richer relational information about class similarities than hard labels alone. The resulting compressed model achieves comparable performance to its larger counterpart.

It can be seen that the current landscape of model compression remains to a large extent fragmented, with individual methods developed according to their own empirical heuristics. This fragmentation raises a fundamental question: \textit{Is it possible to unify disparate classes of model compression methods within a common mathematical framework?} Addressing this question is nontrivial and consequential. First, a unifying theory is intellectually compelling, as it would reframe existing techniques not as isolated tools but as distinct manifestations of a shared mathematical substrate. Second, such a framework would elevate the field from a collection of scattered heuristic techniques to a principled discipline, enabling the systematic design of compression methods, and facilitating the generation of new approaches from principles. The present work addresses this question through both theoretical derivation and empirical development.

Theoretically, we first establish a unifying mathematical framework for model compression grounded in measure theory, which we term the \textbf{universal compressibility theorem}. Formally, let $\Sigma$ denote the parameter set of the original model and $\Sigma^{\dagger}$ the parameter set of the compressed model. Model compression is then interpreted as constructing a mapping $g: \Sigma \to \Sigma^{\dagger}$ that reduces the set size, i.e., $\mathfrak{s}(\Sigma) > \mathfrak{s}(\Sigma^{\dagger})$, where the size functional $\mathfrak{s}(\cdot)$ is defined as $\mathfrak{s}(\cdot) = \texttt{Pointsize} \times \mathfrak{m}(\cdot)$. Here, $\mathfrak{m}(\cdot)$ denotes the measure of a set, capturing its cardinality in a continuous sense, and $\texttt{Pointsize}$ represents the average number of digits required to encode a point in that set. Additionally, the preimage $g^{-1}(\Sigma^{\dagger})$ is required to approximate $\Sigma$ within a prescribed error tolerance. Within this framework, we demonstrate that five families of existing compression techniques can be subsumed as special cases of the universal compressibility theorem. Building on this framework, we further instantiate the mapping $g$ as a neural network. This yields the \textbf{structural equivalence theorem}, which establishes that each model compression technique is mathematically equivalent to a neural network subject to a regularization, which offers a fresh perspective.

Empirically, since combining universal compressibility and  structural equivalence
theorems leads to a network approach, we materialize our theory by presenting \textit{Big2Small}, a data-free post-training model compression framework that translates the idea of implicit neural representations (INRs) from image compression to model compression. INRs excel at representing continuous signals with high fidelity and have proven effective across data denoising~\cite{haider2025inr}, compression~\cite{jayasundara2025sinr,dupont2021coin,dupontcoin++}, and super-resolution~\cite{tang2024stsr}. By treating weight tensors as discrete samples of an underlying continuous function, we reconstruct them using a small network. To minimize reconstruction error and preserve high-frequency components, we introduce outlier-aware preprocessing and a frequency-aware loss function. Notably, our approach can be orthogonally combined with other techniques (e.g., quantization) for even greater compression. Our main contributions are

\begin{itemize} 
\item We establish a unifying theoretical framework for model compression. We demonstrate that all existing major families of model compression algorithms can be unified into one mathematical framework, and each method possesses an equivalent neural network architecture subject to a regularization. To the best of our knowledge, it is the first time for the field of model compression to have this kind of unifying theory. 

\item We introduce \textit{Big2Small}, a data-free model compression paradigm that compresses large models using small INRs. To our knowledge, this work represents one of the first attempts to employ INRs for neural network compression, which extends the application boundary of INRs. We further propose several strategies, including Outlier-Aware Preprocessing and Frequency-Aware Loss, to improve the practical compression performance. 

\item Experiments on image classification and segmentation networks demonstrate that \textit{Big2Small} achieves competitive performance across multiple classic architectures compared to the state-of-the-art data-free post-training model compression baselines.
\end{itemize}

\section{Related Works}

\subsection{Model Compression Methods}
To put our work in perspective, this section briefly reviews recent advances of five major families of model compression methods: low-rank decomposition, quantization, model pruning, ergodic dynamic systems, and knowledge distillation.


\begin{table*}[ht]
\caption{Properties of model compression methods.}
\centering
\scalebox{0.95}{
\begin{tabular}{l|
>{\columncolor[HTML]{FFFFC7}}c 
>{\columncolor[HTML]{DAE8FC}}c 
>{\columncolor[HTML]{FFFFC7}}c 
>{\columncolor[HTML]{DAE8FC}}c 
>{\columncolor[HTML]{FFFFC7}}c }
\hline
Feature         & Low-Rank Decomposition                       & Quantization           & Pruning                                   & Ergodic Dynamic Systems                      & Knowledge Distillation                            \\ \hline
Primary Goal    & Matrix Factorization                         & Precision Reduction    & Redundancy Removal                        & Trajectory Representation                    & Behavior Mimicry                                  \\
Sparsity Type   & Structured                                   & Dense                  & Unstructured or Structured                & Continuous                                   & Dense                                             \\
Core Operations & SVD, CP, Tucker                              & Integer Mapping        & Sparsity Induction                        & Ergodic Hyperfunctions                       & Soft Target Matching                              \\
Data Reliance   & {\color[HTML]{FE0000} Low (Often Data-free)} & Moderate (Calibration) & {\color[HTML]{3166FF} High (Fine-tuning)} & {\color[HTML]{FE0000} Low (Direct Encoding)} & {\color[HTML]{3166FF} Very High (Teacher Output)} \\ \hline
\end{tabular}}
\label{tab:compare}
\end{table*}

{\small$\bullet$} \textit{Low-rank Decomposition}: Recent studies in low-rank decomposition have focused on automating the rank-selection process, which was traditionally a manual trial-and-error effort~\cite{hsulanguage}. Modern frameworks utilize sensitivity analysis to identify how much redundancy exists in each part of the network by measuring the Hessian trace of a layer~\cite{yang2024stable}. Recently, low-rank decomposition has been intensively used in large models. A representative work is LoRA, which freezes the pre-trained model weights and injects trainable rank decomposition matrices into each layer, reducing the number of trainable parameters for downstream tasks~\cite{hulora}. Moreover, TensorGPT is a training-free model compression approach based on the Tensor-Train Decomposition, which reduces the number of parameters by a factor of 2.31~\cite{xu2023tensorgpt}.

{\small$\bullet$} \textit{Quantization}: The frontier of quantization research is to push towards extreme low-bit regimes, giving rise to Binarized Neural Networks (BNNs)~\cite{hubara2016binarized} and Ternary Quantization~\cite{liu2023ternary}. BNNs represent weights and activations using just a single bit (e.g., $+1$ and $-1$), allowing for the replacement of expensive floating-point multiplications with bitwise XNOR and population count operation~\cite{hubara2016binarized}. Seminal works like XNOR-Net introduced the straight-through estimator to handle the non-differentiability of the sign function during backpropagation~\cite{rastegari2016xnor}. While extreme quantization often incurs a performance penalty on complex tasks like ImageNet classification, hybrid strategies like quantization-aware distillation have been developed, achieving high accuracy recovery even at 4-bit precision for massive models like Llama~\cite{kim2019qkd,liu2024llm}.

{\small$\bullet$} \textit{Pruning}: The pruning workflow typically involves an iterative process: a pre-trained model is evaluated to identify unimportant parameters to prune. Then, the model is fine-tuned to allow the remaining weights to compensate for the loss~\cite{he2023structured}. Recent innovations have introduced task-agnostic pruning methods and gradient-based importance measures, such as SparseGPT, which allow for the pruning of models with billions of parameters in a few hours without extensive retraining~\cite{frantar2023sparsegpt}. Global pruning techniques further enhance efficiency by applying a single threshold across the entire network, allowing the pruning rate to vary between layers based on their relative information density~\cite{bai2024sparsellm}. 

{\small$\bullet$} \textit{Ergodic Dynamic Systems}: Ergodic dynamic systems represent a novel approach in model compression research that redefines the problem as the issue of parsimonious parameter representation via a hyperfunction~\cite{fan2024hyper}. By identifying a suitable dynamic system, the weights of a large network can be encoded as a set of hyperparameters $\theta$ and a hyperfunction $h$, where each parameter $w_n$ is recovered via $w_n = h(\theta; n)$. Currently, this type of method has been successfully applied to compress a LLaMA series model~\cite{fan2024hyper}, Segment Anything Models (SAMs)~\cite{fan2025compress}, and large images~\cite{11359320}.

{\small$\bullet$} \textit{Knowledge Distillation}:
KD is highly effective for compressing LLMs. Recent advancements include ``online distillation'' methods, which train multiple student branches simultaneously without a static pre-trained teacher~\cite{guo2020online}; and ``dataset distillation'', which condenses massive training sets into compact synthetic datasets~\cite{fang2026knowledge}. While KD is highly effective at preserving emergent reasoning in LLMs, it can be computationally expensive to train the teacher-student pair, and the performance of the student is inherently capped by the capacity of the teacher and the compatibility of their architectures~\cite{xu2024survey}. 

While \textit{Big2Small} also relies on neural networks, it differs fundamentally from KD and self-KD in two key ways. Primarily, \textit{Big2Small} operates on a ``compression-decompression'' paradigm that dynamically reconstructs weights during inference, while KD only uses the student model after training. Furthermore, \textit{Big2Small} is applied layer by layer—using a separate network to learn each layer's weights—whereas KD optimizes a single student model to replicate the final predictions of the large model. Lastly, \textit{Big2Small} is a data-free compression method, while most KD methods rely heavily on data except for self-KD. Data-free model compression (DFMC) attempts to compress the model without accessing to the original training data, which poses privacy risks in sensitive domains such as healthcare and finance. Prevalent DFMC strategies primarily include pruning, quantization, model merging, and ergodic dynamic systems. For instance, unifying Data-Free Compression (UDFC) executes pruning and quantization jointly without requiring data access~\cite{bai2023unified}. Similarly, methods like ZipIt!~\cite{solodskikh2023integral} and Model Folding~\cite{wangforget} reduce model size by merging redundant computational units within the original networks. Data-Free Knowledge Distillation (DFKD) takes a different approach, employing synthetic data generated via GANs or optimized noise to facilitate student model fine-tuning~\cite{yin2020dreaming,raikwar2022discovering}. Lastly, previously mentioned Hyper-Compression is also the DFMC method~\cite{fan2024hyper,fan2025compress}. 

In summary, despite all model compression methods share the fundamental goal of reducing model size and computational cost~\cite{zhu2024survey}, they differ in mechanisms. A comparative overview of their key properties is provided in Table~\ref{tab:compare}. Building on this landscape, this paper introduces a unifying mathematical framework to provide a systematic and in-depth understanding of these diverse compression techniques, which is novel in the field of model compression.

\subsection{Implicit Neural Representation and Hypernetworks} 
Implicit Neural Representations (INRs) parameterize a signal as a continuous function that maps input coordinates to output signal values (e.g., RGB pixel intensities for images). To enhance the expressivity of INRs, recent studies have proposed specialized activation functions and architectural modules. For instance, sinusoidal (SIREN)~\cite{sitzmann2020implicit} and wavelet (WIRE)~\cite{saragadam2023wire} activation functions were designed to capture high-frequency details. Tancik \textit{et al.} introduced a positional encoding scheme that projects inputs into Fourier space, enabling the network to learn high-frequency functions within low-dimensional domains~\cite{Mildenhall2020}. Furthermore, Mehta \textit{et al.} proposed modulated periodic activations to strengthen local feature representation~\cite{Mehta_2021_ICCV}. Beyond architectural improvements, the flexibility of INRs allows for seamless integration with downstream compression techniques to further enhance compression ratios. These include quantization~\cite{dupontcoin++, gordon2023quantizing} and Bayesian coding approaches~\cite{guo2023compression,herecombiner}. 

The hypernetwork (often shortened to hypernet, \cite{stanley2009hypercube,chauhan2024brief}) is a small meta-network designed to generate the weights for a larger, primary target network. Instead of learning its parameters directly via gradient descent, the target network has its weights dynamically produced by the hypernetwork, which takes an input (such as an embedding or task ID) and outputs the weight matrices for the main model. This architecture allows for dynamic and adaptable model behavior, enables weight sharing across layers for efficiency, and is particularly useful in meta-learning scenarios where a model must quickly adapt to new tasks by having its weights generated on the fly.

Examples include one-shot generation~\cite{shamsian2021personalized,zhang2018graph}, component-wise generation~\cite{alaluf2022hyperstyle}, chunk-wise generation~\cite{chauhan2024dynamic}, and multiple generation~\cite{beck2023hypernetworks}. However, to the best of our knowledge, the idea of hypernet was not used for model compression before~\cite{chauhan2024brief}. On the one hand, the proposed \textit{Big2small} diverges fundamentally from the conventional paradigm of INRs in two key aspects. First, whereas INRs employ a neural network to directly represent data, \textit{Big2small} utilizes a network to generate the parameters of another target network. In this sense, \textit{Big2small} can be viewed as a generalization of the INR concept, extending its application from data to network. Second, two approaches operate in opposite scaling regimes relative to their targets. INRs often employ networks larger than the data itself to achieve higher fidelity, while \textit{Big2small} is explicitly designed for compression, producing a compact representation of a larger target network. On the other hand, the proposed \textit{Big2small} is also different from the idea of hypernet. The former is for static compression of the target network, while the latter is for the dynamic tuning of the target network.

\section{A unifying Theoretical Framework of Model Compression}

In this section, we propose a unifying mathematical definition of model compression as a {neural function approximation} problem within a measure-theoretic framework. Our overarching insight is to interpret model compression as the projection from a large collection of parameters to a small collection of parameters. Under this umbrella, we present two primary theoretical results: i) \textbf{universal compressibility theorem} proving that these compression methods (low-rank, pruning, quantization, ergodic dynamic systems, knowledge distillation) can rigorously restore the original weights by the inverse projection with an error $\epsilon$,  which warrants compressibility; and ii) \textbf{structural equivalence theorem} demonstrating that all five techniques are mathematically equivalent to specific architecture of neural networks.

\subsection{Preliminaries}

We begin by establishing the geometric foundations of our framework, distinguishing between the original parameter space and the bounded parameter sets.

\begin{definition}[Parameter Space and Sets]
    Let $\Theta \cong \mathbb{R}^D$ be a finite-dimensional Euclidean space equipped with the $\ell_2$ norm $\|\cdot\|_2$, representing the unconstrained parameter space. i) We define the original parameter set $\Sigma \subset \Theta$ as a compact subset of weights, satisfying $\|\theta^*\|_2 \le R$ for all $\theta^* \in \Sigma$. ii) The compressed weight is defined as ${\theta}^{\dagger} \in \Sigma^{\dagger}$. iii) The reconstructed weight of the compressed model is defined as $\hat{\theta} \in \hat{\Sigma}$. Let $\mathfrak{m}$ denote the standard Lebesgue measure on $\Theta$, restricted to $\Sigma$, such that $\mathfrak{m}(\Sigma) > 0$.
\end{definition}

\begin{definition}[Distance Function]
Define a distance function as $h(\theta^*, {\theta}^{\dagger})$, we utilize the Euclidean distance to quantify the approximation error between the target weight $\theta^*$ and the compressed weight ${\theta}^{\dagger}$. 

\begin{enumerate}
    \item In the case of low-rank decomposition, quantization, pruning, and ergodic systems, the distance function is
\begin{equation}
    \texttt{dist}(\theta^*, \theta^{\dagger}) = \|\theta^*-g^{-1}(\theta^{\dagger})\|_2,
\end{equation}
where $g^{-1}$ recovers the target weights from the compressed weight. 
\item In the case of knowledge distillation, the distance function is
\begin{equation}
    \texttt{dist}(\theta^*, \theta^{\dagger}) = \|F^*(\theta^*,x)-F^\dagger(\theta^\dagger,x)\|_2,
\label{eq:dist2}
\end{equation}
where $F^*(\theta^*,x)$ and $F^\dagger(\theta^\dagger,x)$ are the teacher network's and the student network's output with respect to $x$.
\end{enumerate}

\end{definition}

\begin{definition}[Sets Size]
    Let $\texttt{Pointsize}(\cdot)$ denote the average number of bits required to encode a point in a given set. We define the size functional of a parameter set as
    \begin{equation}
        \mathfrak{s}(\Sigma) = \texttt{Pointsize}(\Sigma) \times \mathfrak{m}(\Sigma).
    \end{equation}

\end{definition}

\begin{definition}[Model Compression Mapping]
   We define the mapping function as $g$. For the target optimal weights $\theta^* \in \Sigma$, $g$ compresses the target weights to 
   \begin{equation}
       {\theta}^\dagger = g(\theta^*) \in \Sigma^{\dagger},
   \end{equation}
   and reconstructs the target weights
   \begin{equation}
       \hat{{\theta}}= g^{-1}(\theta^\dagger) \in \hat{\Sigma}.
   \end{equation}
The reconstructed weights  must be bounded by a given error tolerance $\epsilon>0$:
        \begin{equation}
           \texttt{dist}(\theta^*,\theta^{\dagger}) < \epsilon.
        \end{equation}
Here, we slightly abuse the notation. While $g$ and $g^{-1}$ denote the compression and reconstruction mappings respectively, they do not form a strict mathematical inverse pair due to the lossy nature of compression.
\end{definition}


\begin{definition}[Sets of Compression Methods]
    We define the compressed parameter sets $\Sigma^{\dagger}$ obtained from five primary compression paradigms. 
    \begin{enumerate}
        \item \textbf{Low-Rank Set ($\Sigma^{\dagger}_r$):} The set of weight matrices with a bounded $\texttt{rank}(\cdot)$. 
        \begin{equation}
            \Sigma^{\dagger}_r = \{ \theta^{\dagger} \in \mathbb{R}^{n \times m} \mid \texttt{rank}(\theta^{\dagger}) \le r \}.
        \end{equation}
        \item \textbf{Quantization Set ($\Sigma^{\dagger}_q$):} The uniform lattice defined by a bit-width $b$. 
        \begin{equation}
        \Sigma^{\dagger}_q = \{ \theta^{\dagger} \in \mathbb{R}^D \mid \theta^{\dagger}_i \in \{-R + j\Delta \mid j \in \{0, 1, \dots, 2^b-1\} \} \}.
        \end{equation}
        \item \textbf{Pruning Set ($\Sigma^{\dagger}_p$):} The set of subspaces with at most $N$ non-zero elements. 
        \begin{equation}
            \Sigma^{\dagger}_p = \{ \theta^{\dagger} \in \mathbb{R}^D \mid \|\theta^{\dagger}\|_0 \le N\}.
        \end{equation}
        \item \textbf{Ergodic Set ($\Sigma^{\dagger}_{e}$):} The set of trajectories generated by a chaotic map $T$. 
        \begin{equation}
            \Sigma^{\dagger}_{e} = \{ k \in \mathbb{N} \mid \hat{\theta} = T^k(\theta_0^\dagger), \, \theta_0^\dagger \in  \mathbb{R}^D \}.
        \end{equation}
        \item \textbf{Knowledge Distillation Set ($\Sigma^{\dagger}_{k}$):} The parameter set represents a smaller ``student'' architecture. 
        \begin{equation}
            \Sigma^{\dagger}_{k} = \{ \theta^{\dagger} \in \mathbb{R}^d \times \{0\}^{D-d}\}.
        \end{equation}
        
    \end{enumerate}
\end{definition}

\subsection{Theory I: Universal Compressibility}

In this section, we establish the theoretical validity of model compression. We propose a unifying theorem that shows that the compression sets ($\Sigma^{\dagger}$)  defined by five primary compression methods rigorously reduce the set size $\mathfrak{s}(\cdot)$ of the original set, and approximating the original parameters by the reconstructed parameters is bounded by a given error
tolerance  $\epsilon$. An illustration framework of this theory is depicted in Figure~\ref{fig:theory1}.

\begin{figure}[ht]
    \centering
    \includegraphics[]{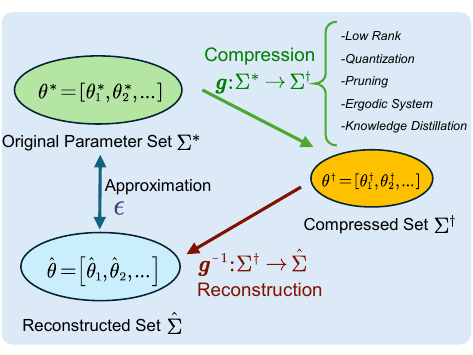}
    \caption{The theoretical framework of Theorem 1.}
    \label{fig:theory1}
\end{figure}

\begin{theorem}[Universal Compressibility]
    Let the original weight parameter set be $\Sigma$ with $\mathfrak{m}(\Sigma) > 0$. For any original weight $\theta^* \in \Sigma$ and error tolerance $\epsilon > 0$, there exists paired mapping functions $g: \Sigma \to \Sigma^{\dagger}$ and $g^{-1}: \Sigma^{\dagger} \to \hat{\Sigma}$, corresponding to Low-Rank Decomposition, Quantization, Pruning, Ergodic Dynamic Systems, or Knowledge Distillation, such that
    \begin{equation}
        \mathfrak{s}(\Sigma) > \mathfrak{s}(\Sigma^{\dagger}) \quad \text{and} \quad  \texttt{dist}(\theta^*, \theta^\dagger)< \epsilon.
    \end{equation}
\end{theorem}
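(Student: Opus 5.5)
The proof proceeds by cases, one per compression paradigm. For each paradigm we exhibit $g$ and $g^{-1}$ explicitly and verify two things: the size inequality $\mathfrak{s}(\Sigma)>\mathfrak{s}(\Sigma^{\dagger})$ and the error bound $\texttt{dist}(\theta^*,\theta^\dagger)<\epsilon$.

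For the size inequality, we handle the measure factor $\mathfrak{m}$ and the $\texttt{Pointsize}$ factor separately. Since $\mathfrak{m}(\Sigma)>0$ by assumption, we want sets with $\mathfrak{m}(\Sigma^{\dagger})=0$. When a set is Lebesgue-null in $\mathbb{R}^D$ but we still want to compare sizes meaningfully, we fall back on the $\texttt{Pointsize}$ factor: we count the number of stored numbers times bits per number, and show that this count is strictly smaller.

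\textbf{Low-rank.} Take $g(\theta^*)$ to be the truncated SVD at rank $r$. By Eckart--Young,
\[
\|\theta^*-g^{-1}(g(\theta^*))\|_2=\Big(\sum_{i>r}\sigma_i^2\Big)^{1/2}.
\]
Choose $r$ minimal such that this tail is below $\epsilon$. Whenever $r<\min(n,m)$, the set $\Sigma^\dagger_r$ is a proper algebraic variety and hence Lebesgue-null. Its points are stored with $r(n+m)<nm$ numbers.

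\textbf{Quantization.} Take $g$ to be coordinatewise rounding to the lattice with step $\Delta=2R/(2^b-1)$, so that
\[
\|\theta^*-\hat\theta\|_2\le \sqrt D\,\Delta/2.
\]
Choose $b>\log_2(1+2R\sqrt D/\epsilon)$. The set $\Sigma^\dagger_q$ is finite, hence null, and each point costs $bD$ bits rather than $32D$.

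\textbf{Pruning.} Take $g$ to keep the $N$ largest-magnitude entries. The error is the $\ell_2$ norm of the discarded tail; choose $N$ minimal so that this tail is below $\epsilon$. For $N<D$, the set $\Sigma^\dagger_p$ is a finite union of $N$-dimensional coordinate subspaces, hence null.

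\textbf{Ergodic systems.} Use a map $T$ with a dense orbit in the cube $[-R,R]^D$, for instance an irrational winding on the torus. Density gives a $k$ with $\|T^k(\theta_0)-\theta^*\|_2<\epsilon$. We set $g(\theta^*)=k$ and $g^{-1}(k)=T^k(\theta_0)$. Then $\Sigma^\dagger_e\subset\mathbb{N}$ is countable, hence null, and each point is a single integer.

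\textbf{Knowledge distillation.} Invoke the universal approximation theorem on the compact input domain to obtain a student of width $d<D$ with
\[
\sup_x\|F^*(\theta^*,x)-F^\dagger(\theta^\dagger,x)\|_2<\epsilon.
\]
The set $\Sigma^\dagger_k$ is a $d$-dimensional subspace, hence null.

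\textbf{Main obstacle.} The crux is that "$<\epsilon$ for arbitrary $\epsilon$" and "strict size reduction" pull in opposite directions. In the low-rank and pruning cases, $\epsilon$ may force $r=\min(n,m)$ or $N=D$, and in the quantization case $b$ may need to be large.

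I would resolve this through the measure factor rather than through bit counts. For the quantization and ergodic cases, any finite or countable $\Sigma^\dagger$ already gives $\mathfrak{s}(\Sigma^\dagger)=0<\mathfrak{s}(\Sigma)$ for every $\epsilon$, because $\texttt{Pointsize}$ is finite there. For the low-rank and pruning cases, the plan is to assume the nondegenerate regime in which the tail falls below $\epsilon$ before full rank or full support is reached. Alternatively, note that $\Sigma$ has positive measure while each $\Sigma^\dagger$ with $r<\min(n,m)$ or $N<D$ is null, and state the theorem for $\epsilon$ above the corresponding tail threshold.

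The knowledge distillation case requires the extra hypothesis that $d<D$ suffices. I would justify this by noting that $D$ is finite and counting parameters in the approximating student network. This is the step I expect to need the most care, since universal approximation alone does not bound the student's size by the teacher's.
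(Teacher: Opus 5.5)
Your case split, the use of Lebesgue-nullity of $\Sigma^{\dagger}$ for the size inequality, and the Eckart--Young and tail-sum error bounds follow the paper's route. Two of your cases are cleaner than the paper's.

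For quantization, the paper gets size reduction from $\texttt{Pointsize}(\Sigma^{\dagger}_q)=b<32$. That is incompatible with sending $b\to\infty$ to force $R\sqrt{D}/2^b<\epsilon$. Your remark that $\Sigma^{\dagger}_q$ is finite, so $\mathfrak{s}(\Sigma^{\dagger}_q)=0$ for every $b$, removes that conflict. Your step $\Delta=2R/(2^b-1)$ also makes the per-coordinate bound $\Delta/2$ actually valid; with the paper's $\Delta=2R/2^b$ the top lattice point is $R-\Delta$.

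For the ergodic case, the paper appeals to topological transitivity and shadowing with coordinatewise balls. A single dense orbit of a minimal rotation, with a fixed seed so that only $k$ is stored, is simpler and is all that is needed.

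Two small repairs are needed:
\begin{itemize}
\item $\texttt{Pointsize}$ is not finite on all of $\mathbb{N}$. Note that the balls $B(T^k\theta_0,\epsilon)$ cover the compact $\Sigma$, so finitely many indices suffice and $g(\Sigma)$ can be taken finite.
\item $r(n+m)<nm$ needs $r<nm/(n+m)$, not merely $r<\min(n,m)$. This is harmless, since your size argument is measure-theoretic.
\end{itemize}

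The obstacle you isolate is genuine, and the paper's proof does not resolve it. It lets $r\to\min(n,m)$ and $k\to D$ while still asserting $\mathfrak{m}(\Sigma^{\dagger})=0$, which holds only for $r<\min(n,m)$ and $N<D$. Read method by method, the claim is in fact false for low-rank and pruning. Since $\mathfrak{m}(\Sigma)>0$, $\Sigma$ contains a full-rank $\theta^*$ with no zero entry. For $\epsilon\le\min(\sigma_{\min}(\theta^*),\min_j|\theta^*_j|)$, every rank-deficient matrix and every vector with a zero entry lies at distance at least $\epsilon$ from $\theta^*$. So your restriction on $\epsilon$ is necessary, not a convenience.

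On the other hand, the statement as literally written (``or'') only asks for a pair corresponding to one of the five methods. Your quantization case alone, or your ergodic case, already proves it for every $\epsilon>0$. Say so, and present low-rank and pruning as holding only above the tail threshold.

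For knowledge distillation, the paper merely assumes ``sufficient student capacity'', so you are not missing an argument it supplies. If you want $d<D$ uniformly in $\epsilon$, standard universal approximation will not give it. Fixed-size universal approximators (Maiorov--Pinkus, with a specially constructed activation) do, as soon as $D$ exceeds their parameter count, which depends only on the input and output dimensions.
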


\begin{proof}
    We prove both the set size $\mathfrak{s}(\Sigma) > \mathfrak{s}(\Sigma^{\dagger})$ and the approximation bound $\epsilon$ for each model compression method.

    \subsubsection{Low-Rank Set}
    The compression mapping function $g_{r}: \Sigma \to \Sigma^{\dagger}_r$ is defined by the Singular Value Decomposition (SVD)~\cite{eckart1936approximation}, truncating to rank $r$.
    \begin{itemize}
        \item \textbf{Size Reduction:} The set $\Sigma^{\dagger}_r$ is an algebraic variety of dimension $r(n+m-r)$, which is strictly less than $nm$. Consequently, the Lebesgue measure of the compressed set is zero: $\mathfrak{m}(\Sigma^{\dagger}_r) = 0$.  Thus, $\mathfrak{s}(\Sigma) > \mathfrak{s}(\Sigma^{\dagger}_r) = 0$.
        \item \textbf{Approximation:} By the Eckart-Young-Mirsky theorem~\cite{golub1987generalization}, the approximation error for $\hat{\theta}_r = g^{-1}_{r}(\theta^{\dagger}_r)$ is determined by the tail singular values $\sigma$:
        \begin{equation}
            \texttt{dist}(\theta^*, \theta^{\dagger}_r)= \|\theta^*-g^{-1}(\theta^{\dagger}_r)\|_2 = \sqrt{\sum_{i=r+1}^{\min(n,m)} \sigma_i^{2}}<\epsilon.
        \end{equation}
        As mentioned before, $\theta^*$ is bounded. Hence, the tail sum converges to 0 as $r \to \min(n,m)$. Thus, there exists a rank $r$ such that the error is bounded by $\epsilon$.
    \end{itemize}

    \subsubsection{Quantization Set}
    Let $\theta^* \in \mathbb{R}^D$. The mapping $g_q: \Sigma \to \Sigma^{\dagger}_q$ is the uniform rounding function mapping~\cite{lee2023flexround} onto a discrete lattice with $2^b$ levels over $[-R, R]$. The step size is $\Delta = \frac{2R}{2^b}$.
    \begin{itemize}
        \item \textbf{Size Reduction:} $\texttt{Pointsize}(\Sigma^{\dagger}_q) = b$, which is strictly smaller than the standard 32-bit floating point size of $\Sigma$, ensuring $\mathfrak{s}(\Sigma) > \mathfrak{s}(\Sigma^{\dagger}_q)$.
        \item \textbf{Approximation:} The maximum error occurs when $\theta^*$ is at the center of a lattice hypercube:
        \begin{equation}
             \texttt{dist}(\theta^*,\theta^{\dagger})=\|\theta^* - g^{-1}_q(\theta^{\dagger}_q)\|_2 \le \frac{\Delta}{2} \sqrt{D} = \frac{R\sqrt{D}}{2^b}<\epsilon.
        \end{equation}
        As $b \to \infty$, the term $\frac{R\sqrt{D}}{2^b} \to 0$. For sufficiently large bit-width $b$, the error is $<\epsilon$.
    \end{itemize}

    \subsubsection{Pruning Set}
    The mapping $g_{p}: \Sigma \to \Sigma^{\dagger}_p$ applies Magnitude Pruning~\cite{kohama2023single}, zeroing out all but the $k$ largest coefficients.
    \begin{itemize}
        \item \textbf{Size Reduction:} $\Sigma^{\dagger}_p$ is a set of finite lower-dimensional coordinate subspaces (each of dimension $k < D$). Hence, the $D$-dimensional Lebesgue measure $\mathfrak{m}(\Sigma^{\dagger}_p) = 0$, satisfying $\mathfrak{s}(\Sigma) > \mathfrak{s}(\Sigma^{\dagger}_p)$.
        \item \textbf{Approximation:} The squared approximation error is the energy of the discarded weights:
        \begin{equation}
             \texttt{dist}(\theta^*,\theta^{\dagger})=\|\theta^* - g_{p}^{-1}({\theta^\dagger_p})\|_2^2 = \sum_{j=k+1}^D |\theta^*_{j}|^2<\epsilon,
        \end{equation}
        where $\theta^*_{j}$ denotes the weights sorted by magnitude. As the sparsity parameter $k \to D$, the residual tail sum vanishes to 0.
    \end{itemize}

    \subsubsection{Ergodic Set}
    The mapping $g_{e}: \Sigma \to \Sigma^{\dagger}_{e}$ finds the optimal initial seed $x_0 \in \mathbb{R}^m$ ($m \ll D$) for an ergodic dynamical system $T$ to approximate $\theta^*$.
    \begin{itemize}
        \item \textbf{Size Reduction:} The parameter degrees of freedom are reduced from $D$ to $m$. The intrinsic dimension of $\Sigma^{\dagger}_{erg}$ is bounded by $m$, implying its measure in $\mathbb{R}^D$ is $\mathfrak{m}(\Sigma^{\dagger}_{e}) = 0$. Thus, $\mathfrak{s}(\Sigma) > \mathfrak{s}(\Sigma^{\dagger}_{e})$.
        \item \textbf{Approximation:} We rely on the Topological Transitivity~\cite{akin2012conceptions} of chaotic maps. The \textit{Shadowing Property} guarantees that for open balls $U_i$ centered at $\theta^*_i$ with radius $\frac{\tau}{\sqrt{D}}$, there exists a seed $\theta_0^\dagger$ whose trajectory visits these sets sequentially~\cite{koscielniak2007chaos}. This means that there exits $k$ such that 
        \begin{equation}
        \begin{aligned}
              \texttt{dist}(\theta^*,\theta^{\dagger})&=\|\theta^* - T^k(\theta^\dagger_0)\|_2 \\
              &=\sqrt{\sum_{i=1}^D ({\theta^*_i- (T^k(\theta_0^\dagger))_i})^2}
               < \sqrt{\sum_{i=1}^D \frac{\tau^2}{D}} = \epsilon.
        \end{aligned}
        \end{equation}
    \end{itemize}

    \subsubsection{Knowledge Distillation Set}
    The mapping $g_{k}: \Sigma \to \Sigma^{\dagger}_{k}$ is the distillation process mapping the teacher $\theta^*$ to a smaller student architecture defined on a subspace of dimension $d < D$ (zero-padded).
    \begin{itemize}
        \item \textbf{Size Reduction:} $\Sigma^{\dagger}_{k}$ operates in a $d$-dimensional subspace embedded in $D$ dimensions. Thus, $\mathfrak{m}(\Sigma^{\dagger}_{k}) = 0$, reducing the size functional $\mathfrak{s}(\Sigma) > \mathfrak{s}(\Sigma^{\dagger}_{k})$.
        \item \textbf{Approximation:} According to the KD learning paradigm, the network uses the outputs of the student model $F^\dagger$ to mimic the output of the teacher model $F^*$. We denote the input signal as $x$, and the loss function of two neural networks is equal to the distance function~\eqref{eq:dist2}:
        \begin{equation}
            \mathcal{L}_{kd}= \texttt{dist}(\theta^*, \theta^{\dagger}) = \|F^*(\theta^*,x)-F^\dagger(\theta^\dagger,x)\|_2.
        \end{equation}
By the Universal Approximation Theorem, assuming sufficient student capacity, there exists a configuration $\theta^\dagger$ such that this functional loss is bounded by $\epsilon$.
    \end{itemize}
\end{proof}

\subsection{Theory II: Structural Equivalence}

In this section, we formally prove that five compression methods are mathematically equivalent to specific neural network architectures. 

\begin{theorem}[Structural Equivalence]
    For the five compression methods {low-rank}, {pruning}, {quantization}, {ergodic systems}, and knowledge distillation, there exists a specific neural network compressor $\mathcal{G}_\psi$ with parameter set $\psi$, such that the output set of the network is isomorphic to the compressed set ${\Sigma}^{\dagger}$.
\end{theorem}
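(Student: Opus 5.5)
The proof is constructive and goes case by case, as in the proof of the Universal Compressibility theorem. For each of the five sets $\Sigma^\dagger_r,\Sigma^\dagger_q,\Sigma^\dagger_p,\Sigma^\dagger_e,\Sigma^\dagger_k$, we write down an explicit network $\mathcal{G}_\psi$ (architecture plus the regularization or constraint on $\psi$). We then show that the map $\psi\mapsto\mathcal{G}_\psi$ has image equal to $\Sigma^\dagger$, or is in bijection with it up to the natural identification. We read ``isomorphic'' as a set bijection compatible with the reconstruction map $g^{-1}$, which is all the statement needs. In each case the work is to prove both inclusions: every network output lies in $\Sigma^\dagger$, and every element of $\Sigma^\dagger$ is realized by some admissible $\psi$.

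\textbf{Low-rank.} We take a two-layer linear network with no bias and no activation, $\mathcal{G}_\psi(x)=W_2W_1x$ with $W_1\in\mathbb{R}^{r\times m}$ and $W_2\in\mathbb{R}^{n\times r}$, so the bottleneck width $r$ acts as the regularization. The effective matrix $W_2W_1$ has rank at most $r$. Conversely, the SVD of any rank-$\le r$ matrix gives factors $W_2=U\Sigma^{1/2}$ and $W_1=\Sigma^{1/2}V^\top$. Hence the set of realized weight matrices is exactly $\Sigma^\dagger_r$.

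\textbf{Pruning.} We use a masked network $\mathcal{G}_\psi=M\odot\theta$, with the constraint $\|M\|_0\le N$ on a binary mask $M$, or equivalently an $\ell_0$ penalty. The image is the union of coordinate subspaces, i.e.\ $\Sigma^\dagger_p$. Both inclusions are immediate by choosing $M$ as the support indicator.

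\textbf{Quantization.} We use a network whose output layer is a staircase activation $\sigma_q(t)=-R+\Delta\,\mathrm{clip}(\lfloor (t+R)/\Delta\rceil,0,2^b-1)$. This activation is a finite sum of $2^b-1$ shifted Heaviside units, so it is a legitimate one-hidden-layer network with fixed weights. The image of $\mathbb{R}^D$ under coordinatewise $\sigma_q$ is exactly the lattice $\Sigma^\dagger_q$, since every level $-R+j\Delta$ is attained.

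\textbf{Ergodic systems.} We unroll the map: $\mathcal{G}_\psi$ is a recurrent (weight-tied) network whose layer is $T$, depth $k$, input seed $\theta_0^\dagger$, and the tied weights are the regularization. Its outputs $T^k(\theta_0^\dagger)$ are in bijection with the pairs (seed, $k$) that index $\Sigma^\dagger_e$. If $T$ is, e.g., an irrational winding $x\mapsto x+a \bmod 1$, it can be written with a linear layer followed by a periodic (sawtooth or $\sin$-based) activation, as in Hyper-Compression.

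\textbf{Knowledge distillation.} The network is the student architecture itself, embedded in the teacher's parameter space with the last $D-d$ coordinates fixed to $0$. The constraint $\psi\in\mathbb{R}^d\times\{0\}^{D-d}$ is the regularization, and the identity embedding gives the bijection with $\Sigma^\dagger_k$.

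The main obstacle is the ergodic case. $\Sigma^\dagger_e$ as defined is indexed by $k\in\mathbb{N}$, and the step count $k$ is a discrete, potentially unbounded quantity. A fixed finite-depth network cannot directly realize it. I would first handle this by letting $k$ be part of $\psi$, interpreting the architecture as a family of weight-tied networks of depth $k$. Failing that, for maps like the irrational winding, $T^k(x_0)=x_0+ka \bmod 1$ has a closed form, so a single fixed-depth network with input $k$ computes it exactly. A secondary subtlety is that isomorphism in the quantization and pruning cases holds only at the level of images, not injectively in $\psi$, which is why I frame the claim as equality of output set with $\Sigma^\dagger$.
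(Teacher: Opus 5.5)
Your five architectures (linear bottleneck, staircase, mask, weight-tied recurrence, student network) are the paper's, but you prove each case differently.

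The paper casts every case as a training problem $\min_\psi\mathcal{L}$ and relies on outside results. For low rank it cites Baldi--Hornik: the global minimizer of the bottleneck loss is the truncated SVD. For ergodic systems it cites the Universal Approximation Theorem: a generic RNN $h_{t+1}=\sigma(W_{rec}h_t+b_{rec})$ approximates $T$.

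You instead prove, by two inclusions, that the image of $\psi\mapsto\mathcal{G}_\psi$ equals $\Sigma^\dagger$, which is what the statement asserts. The paper's optimization claim then follows for free: minimizing over $\psi$ is minimizing over $\Sigma^\dagger_r$, and Eckart--Young applies. Baldi--Hornik adds only the landscape fact that there are no spurious local minima, which matters for training but not for the theorem.

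Your version is also sharper in two cases. First, your clipped staircase has image exactly the $2^b$-level set. The paper's $\psi_s\lfloor x\rfloor+\psi_{zero}$ outputs the unbounded lattice $\{k\psi_s+\psi_{zero}:k\in\mathbb{Z}\}$, though it does gain a learnable scale and zero-point. Second, using $T$ itself as the tied layer, or the closed form $x_0+ka \bmod 1$ with $k$ as input, gives exact trajectories. UAT only approximates $T$ uniformly on a compact set, and iterating a chaotic map amplifies that error. So the paper's ``exact ergodic trajectory'' is not actually delivered by its argument.

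The cost is generality. Your ergodic construction needs $T$ to be exactly an affine map followed by an activation, such as the irrational winding with a sawtooth or the tent map with ReLU. The paper's covers any continuous $T$, but only approximately.

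Two small repairs are needed. First, $(\theta_0,k)\mapsto T^k(\theta_0)$ is not injective: $(\theta_0,k)$ and $(T(\theta_0),k-1)$ give the same output. So claim equality of the image with $\{T^k(\theta_0)\}$ rather than a bijection. Second, in the pruning case make $\theta$ a free input ranging over $\mathbb{R}^D$. With $\theta=\theta^*$ fixed, as in the paper, the image is only the finitely many masked copies of $\theta^*$, not all of $\Sigma^\dagger_p$.
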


\begin{proof}
    We prove this theorem case-by-case. In general, the classical compression mapping $g$ can be reformulated as a neural network compressor $\mathcal{G}$, whose output space constitutes the compressed set $\Sigma^{\dagger}$. An illustrative overview of this equivalence is presented in Figure~\ref{fig:theory2}.

\begin{figure}[ht]
    \centering
    \includegraphics[width=\linewidth]{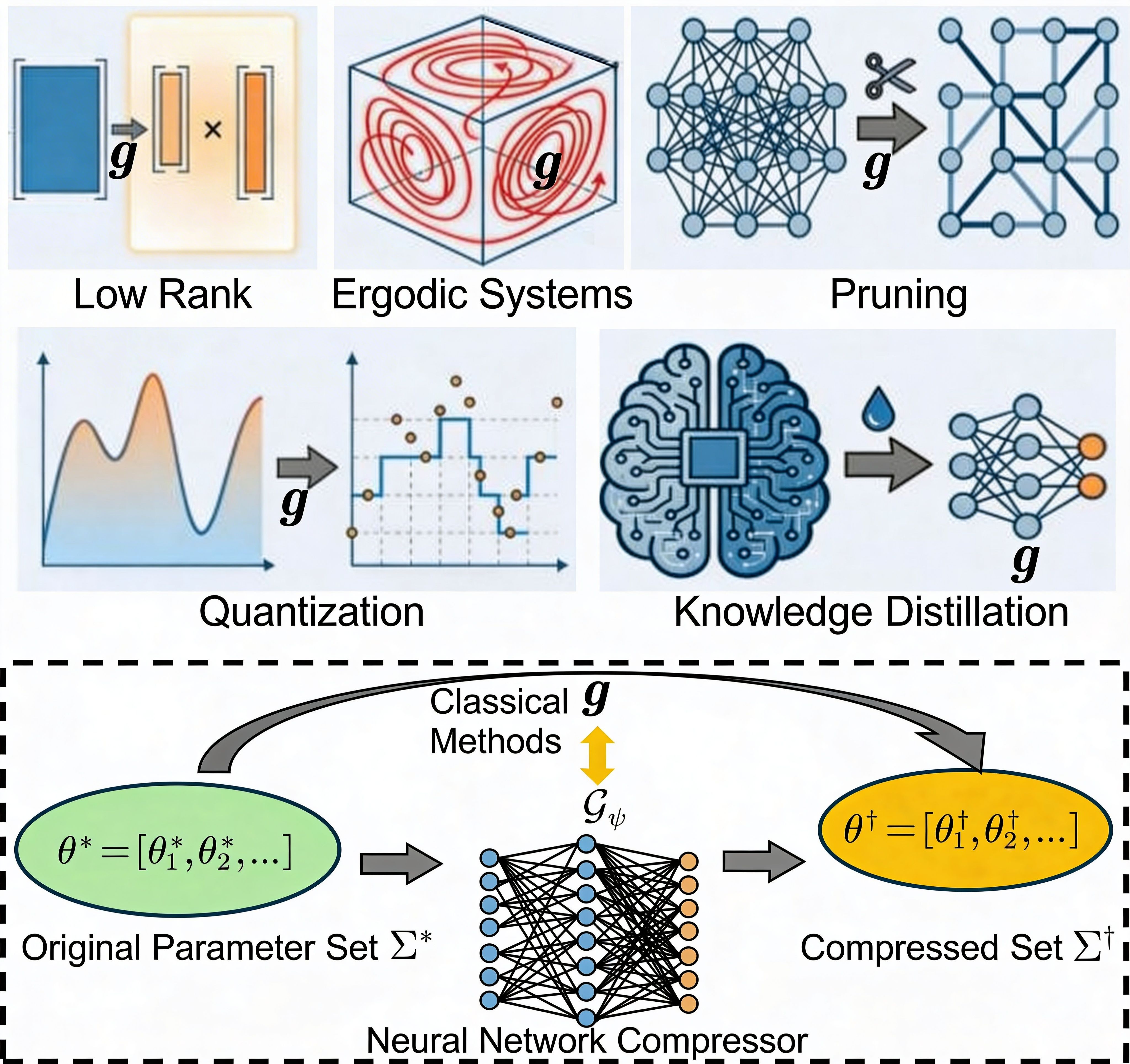}
    \caption{The framework of Theorem 2.}
    \label{fig:theory2}
\end{figure}

\subsubsection{Low-Rank Decomposition as Linear Bottleneck Networks}
Low-rank decomposition approximates the weight matrix $\theta^* \in \mathbb{R}^{n \times m}$ as the product of two smaller matrices $\theta^*  \approx AB$, where $A \in \mathbb{R}^{n \times r}$ and $B \in \mathbb{R}^{r \times m}$ with rank $r \ll \min(n, m)$.

To map this to our framework, we construct a 2-layer linear bottle network $\mathcal{G}_\psi$ with latent input $x = I_{m}$ (the $m\times m$ Identity matrix) and parameter set $\psi = \{{\theta}^\dagger_1, {\theta}^\dagger_2\}$. Here, ${\theta}^\dagger_1 \in \mathbb{R}^{r \times m}$ and ${\theta}^\dagger_2 \in \mathbb{R}^{n \times r}$ correspond to the matrices $B$ and $A$, respectively. The forward pass is defined as a purely linear transformation:
\begin{equation}
    \mathcal{G}_\psi(x) = {\theta}_2^\dagger ({\theta}_1^\dagger I_m) = {\theta}_2^\dagger {\theta}_1^\dagger.
\end{equation}

The compression task becomes the optimization of the following loss function:
\begin{equation}
    \min_{\psi}\mathcal{L}= \min_{\psi} \|\theta^* - \mathcal{G}_\psi(x)\|_2 .
\end{equation}
According to Baldi and Hornik~\cite{baldi1989neural}, the global minimum strictly recovers the optimal rank-$r$ approximation of $W$ given by the Singular Value Decomposition (SVD).

\subsubsection{Quantization as Staircase Activation Networks}
    Classical quantization approximates $\hat{\theta} \approx s \cdot \text{round}(\theta^*/s) + z_{zero}$, with $s$ being scale factor.
    We define this network using a non-linear “staircase" activation function:
    \begin{equation}
        \mathcal{G}_\psi(x) = \psi_{s} \cdot \lfloor x \rfloor + \psi_{zero},
    \end{equation}
    where $\lfloor \cdot \rfloor$ is the rounding operator, the latent code $x = \theta^*/s$ and parameters $\psi = \{s, z_{zero}\}$.

The objective is to minimize the quantization error:
    \begin{equation}
       \min_{\psi}\mathcal{L}= \min_{\psi_s, \psi_{zero}} \| \theta^* - (\psi_{s} \cdot \lfloor x \rfloor + \psi_{zero}) \|_2^2.
    \end{equation}
This formulation strictly restricts the output values to the discrete set $\{ k \cdot \psi_{s} + \psi_{zero} : k \in \mathbb{Z} \}$, which is to approximate the lattice set $\Sigma^*$.

\subsubsection{Pruning as Multiplicative Masking Networks}
    Classical pruning approximates $\hat{\theta} \approx M \odot \theta^*$, where $M_{ij} \in \{0, 1\}$.
    We define the compressor as a Gated Network:
    \begin{equation}
        \mathcal{G}_\psi(x) = \sigma(\psi_{mask}) \odot \theta^*,
    \end{equation}
    where $\psi_{mask} \in \mathbb{R}^d$ are learnable mask scores. 
    
The compression task is to find the optimal binary mask structure that minimizes the approximation error under a sparsity constraint $k$:
    \begin{equation}
    \begin{aligned}
               &\min_{\psi_{mask}}\mathcal{L}=  \min_{\psi_{mask}} \| \theta^* - (\sigma(\psi_{mask}) \odot \theta^*) \|_F^2, \\ &\text{s.t.} \quad \|\sigma(\psi_{mask})\|_0 \le k.
    \end{aligned}
    \end{equation}

If $\sigma(\cdot)$ is the Heaviside step function (or its differentiable surrogate, Gumbel-Softmax), the term $\sigma(\psi_{mask})$ recovers the discrete binary mask $M$, ensuring that the output lies strictly on the pruning set $\hat{\Sigma}_p$.

\subsubsection{Ergodic Systems as Recurrent Neural Networks}
    Classical ergodic compression generates weights via the iteration of a chaotic map: $x_{t+1} = T(x_t); w_t = P(x_t)$.
    We define the compressor $\mathcal{G}_\psi$ as a Recurrent Neural Network (RNN) with the hidden state $h_t \in \mathbb{R}^k$:
    \begin{equation}
        \begin{aligned}
            h_{t+1} &= \sigma(W_{rec} h_t + b_{rec}) \\
            y_{t+1} &= W_{out} h_{t+1} + b_{out}.
        \end{aligned}
    \end{equation}
The initial seed of the trajectory is set to a constant $h_0$. The goal is to learn the ``laws of physics'' (RNN weights) and the ``initial condition'' (seed) that reproduce the target weight vector:
    \begin{equation}
        \min_{\psi}\mathcal{L}=\min_{\psi} \sum_{t=1}^d \| \theta_t^{*} - y_t \|_2^2.
    \end{equation}

By the Universal Approximation Theorem for neural networks, there exists a parameter set $\psi$ such that the RNN dynamics approximate the chaotic map $T$, allowing the compressor to unroll the exact ergodic trajectory.

\subsubsection{Knowledge distillation} Obviously, the paradigm of KD is based on training smaller neural networks. The proof is completed.

\end{proof}

\begin{figure*}[t]
    \centering
    \includegraphics[width=\linewidth]{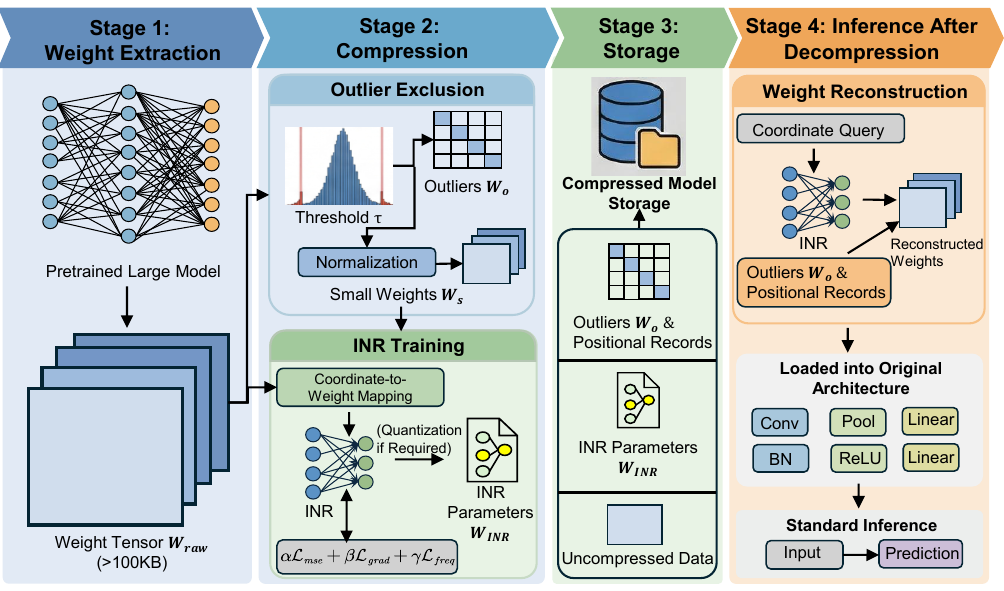}
    \caption{An overview of the proposed \textit{Big2Small} framework. It uses a ``Compression–Decompression'' architecture that encodes discrete weight parameter tensors with lightweight INRs and reconstructs the original weights at inference time.}
    \label{fig:overview}
\end{figure*}

\section{Big2Small}

Our theorem establishes the feasibility that a specific neural network can encapsulate classical model compression methods. This implies the possibility of constructing a small neural network to represent weights of a larger one, termed as \textit{Big2Small}. The \textit{Big2Small} framework introduces a novel system-level approach to neural network compression, which is essentially to translate the idea of INRs from data to a network. Therefore, we can also call our method INR network compressor. As illustrated in Figure~\ref{fig:overview}, \textit{Big2Small} departs from pruning and quantization that directly reduce weights. Instead, it uses a ``Compression–Decompression'' architecture that encodes discrete weight tensors with lightweight INRs. This design substantially reduces storage by saving the encoded representations and reconstructing the original weights at inference time. In doing so, it aims to preserve the network’s salient information, thereby maintaining competitive accuracy relative to traditional compression methods. Consequently, we adopt INRs as our primary compression module, which can be further combined with quantization and other compression methods to enhance compression efficiency.

{The proposed paradigm offers a new direction for model compression. The flexibility of the INR structure allows for unique advantages, such as the ability to iteratively compress the INR itself by other compression methods to achieve higher compression ratios—a capability not completed in traditional methods.} Notably, our method differs from the recently proposed neural-network-based compression approach, Riemannian Neural Dynamics (RieM)~\cite{pei2024data}, which, to our knowledge, is the first to use one neural network to compress the weights of another and achieve satisfactory results. However, RieM focuses on compressing individual weight parameters and does not provide an end-to-end, neural-network-driven compression pipeline.

To be specific, RieM works by regressing each pretrained weight matrix with a separate Riemannian metric and neuron embeddings, using the original weights as reconstruction targets. This multi-stage, hand-engineered pipeline requires extensive hyperparameter tuning (e.g., $D_q$, $D_\mu$, merging thresholds) and substantial offline optimization time per model (from hours to days). It does not provide a unified compressor–decompressor framework that can be reused across architectures or tasks.

\subsection{Training Big2Small}

Let the target model be represented by a set of parameters $\Theta = [ \mathbf{w}_1, \mathbf{w}_2, \dots, \mathbf{w}_N ]$, where each $\mathbf{w}_l$ is a weight tensor of the layer $l$. The INR-reconstructed parameters are $\hat{\Theta} = [ \hat{\mathbf{w}}_1, \hat{\mathbf{w}}_2, \dots, \hat{\mathbf{w}}_N ]$. Now, let us introduce the \textit{Big2Small} framework based on Figure \ref{fig:overview}.

\subsubsection{Outlier-Aware Preprocessing}

The raw weights often contain outliers that are difficult for neural networks to approximate accurately. We therefore introduce a preprocessing strategy prior to INR training.

First, motivated by findings that roughly 1\% of weight outliers dominate quantization error~\cite{zhao2019improving,dettmers2022gpt3}, we locate the indices of the smallest and largest $1\%$ values in $\mathbf{w}_l$:
\begin{equation}
    \mathcal{I}_{\min} = \operatorname{argsort}(\mathbf{w})[:k], \quad \mathcal{I}_{\max} = \operatorname{argsort}(\mathbf{w})[-k:],
\end{equation}
where $k = 1\% \times |\mathbf{w}|$. We store these indices and their exact values in a dictionary. During inference, we patch these values back into the reconstructed tensor, thereby bypassing approximation error for extreme entries.

Second, we normalize the remaining “body" of the distribution to $[-1, 1]$ to match the active regime of the INR used for compression. Let $w_{\min}$ and $w_{\max}$ be the global extrema of $\mathbf{w}$. The normalized weights are defined as
\begin{equation}
    \mathbf{w}_{\mathrm{norm}} = 2 \cdot \frac{\mathbf{w} - w_{\min}}{w_{\max} - w_{\min}} - 1,
\end{equation}
which stabilizes the optimization landscape for the neural compressor.

Finally, we construct a normalized coordinate grid $\mathbf{x}$ that encodes spatial positions. For a tensor with spatial dimensions $(H, W)$, the coordinate for index $(i, j)$ is
\begin{equation}
    \mathbf{x}_{i,j} = \left( 2 \cdot \frac{i}{\max(H-1, 1)} - 1, \; 2 \cdot \frac{j}{\max(W-1, 1)} - 1 \right),
\end{equation}
providing a resolution-agnostic parameterization that enables the INR to learn a continuous function underlying the discrete weight matrix.

\subsubsection{Data-Free Compression Using INRs}

Our objective is to learn a compact representation $\Phi$ with $|\Phi| \ll |\Theta|$ while minimizing reconstruction error.

For a weight tensor $\mathbf{w} \in \mathbb{R}^{d_{\text{out}} \times d_{\text{in}}}$, we view the weights as samples from a continuous signal on a grid. We define a coordinate-to-value mapping (the INR network compressor) $f_\phi$ parameterized by $\phi$, such that
\begin{equation}
    \hat{\mathbf{w}}[\mathbf{x}] = f_\phi(\mathbf{x}).
\end{equation}

For each layer, we train the model by solving
\begin{equation}
    \phi^* = \arg\min_\phi 
    \mathcal{L}\big(\mathbf{w}, f_\phi(\cdot)\big),
\end{equation}
where the loss $\mathcal{L}$ is defined and detailed later.

After INR training, we optionally quantize the INR weights to further reduce storage. For a per-tensor uniform $b$-bit quantizer with scale $s$ and zero-point $z$, we use
\begin{equation}
    q = \operatorname{clip}\Big(\operatorname{round}\big(\tfrac{\hat{\mathbf{w}} - z}{s}\big),\; 0,\; 2^b-1\Big), \quad
    \tilde{\mathbf{w}} = s\, q + z,
\end{equation}
where $s = \tfrac{\max(\hat{\mathbf{w}}) - \min(\hat{\mathbf{w}})}{2^b-1}$ and $z = \min(\hat{\mathbf{w}})$. Here $\tilde{\mathbf{w}}$ denotes the dequantized weights used at inference.

It is noteworthy that we skip batch normalization (BN) layers during compression because their statistics are essential for stable inference. This is actually a common practice in model compression. We compress only convolutional and linear layers larger than 100\,KB to limit overhead and control error. Moreover, if a layer’s reconstruction MSE exceeds 0.01, we discard the compressed version and keep the original.

In summary, the stored model package includes: i) uncompressed parameters; ii) trained INR weights; iii) the outlier dictionary; and iv) position metadata required for inference.

\subsection{The Structure of Implicit Neural Representation}
Our INR models follow the structure in ~\cite{Mehta_2021_ICCV}, employing a dual-MLP encoder for the input coordinates. The overall architecture is illustrated in Figure~\ref{fig:inr}. i) \underline{Synthesis network}: Based on SIREN~\cite{sitzmann2020implicit} with a positional encoder~\cite{Mildenhall2020}, this branch maps input coordinates to reconstructed weight values with strong high-frequency representation capacity. ii) \underline{Modulation network}: A learnable modulator conditions and parameterizes periodic activations, capturing generalizable local functional structure in the weights. 

\begin{figure}[ht]
    \centering
    \includegraphics[width=\linewidth]{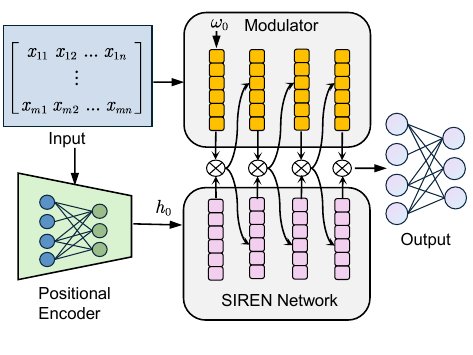}
    \caption{The structure of INR.}
    \label{fig:inr}
\end{figure}

To further enhance high-frequency fidelity, we incorporate a frequency-aware loss function that penalizes first-order gradient discrepancies and reweights residuals in the Fourier domain, encouraging accurate reconstruction of both low- and high-frequency components.

\subsubsection{Frequency-Modulated INR}

Given an input coordinate vector $\mathbf{x} \in \mathbb{R}^{d_{in}}$, the modulator $\mathcal{M}$ and the positional encoder $\mathcal{P}$ simultaneously map the input via a dual-MLP:

\begin{equation}
\begin{cases}
& \mathbf{h}_0 = \mathrm{Linear}_{\text{head}}\big(\mathcal{P}(\mathbf{x})\big),\\
& \mathcal{M}(\mathbf{x}) = \mathrm{Linear}(\mathbf{x}).
\end{cases}
\end{equation}
Next, the modulator output with a base frequency $\omega_{0}$ produces a spatially varying frequency vector $\mathbf{p}(\mathbf{x})$:
\begin{equation}
    \mathbf{p}(\mathbf{x}) = \mathcal{M}(\mathbf{x}) + \omega_{0}.
\end{equation}
For the weight matrix $\mathbf{W}_i$ and bias $\mathbf{b}_i$ in the $i$-th hidden layer, the output is:
\begin{equation}
    \mathbf{h}_{i+1} = \sin\!\left( \mathbf{p}(\mathbf{x}) \odot (\mathbf{W}_i \mathbf{h}_i + \mathbf{b}_i) \right),
\end{equation}
where $\odot$ denotes element-wise multiplication. 

The modulator predicts a location-dependent frequency map that dynamically scales the periodicity of sinusoidal activations in the synthesis branch, controlling spectral bias locally~\cite{Mehta_2021_ICCV}. Unlike SIREN with the fixed $\sin\big(\omega_0(\mathbf{W}\mathbf{h}+\mathbf{b})\big)$, the effective frequency is a learned function of the input. The final hidden state is mapped by a linear tail to the target shape:
\begin{equation}
    \hat{\mathbf{w}} = \mathrm{Linear}_{\text{tail}}(\mathbf{h}_{\text{final}}).
\end{equation}

The positional encoder projects the inputs to a Fourier feature space to efficiently capture higher-frequency structure~\cite{tancik2020fourier}. For the frequency band $i$ and the scalar input $x_j$
\begin{equation}
    \gamma_{i,j}(x_j) = \big[ \sin(\sigma^i \pi x_j),\; \cos(\sigma^i \pi x_j) \big],
\end{equation}
with $i \in \{0, \dots, L-1\}$, $L$  band length, and scaling factor $\sigma$. The encoder concatenates the raw input and all sinusoidal expansions:

\begin{equation}
    \mathcal{P}(\mathbf{x}) = \big[ \mathbf{x},\; \gamma_{0}(\mathbf{x}),\; \gamma_{1}(\mathbf{x}),\; \dots,\; \gamma_{L-1}(\mathbf{x}) \big].
\end{equation}

To prevent aliasing, we apply a Nyquist bandwidth limiter that chooses the maximum band via
\begin{equation}
    L \approx \big\lfloor \log_2(\text{Nyquist Rate}) \big\rfloor = \big\lfloor \log_2(S/2) \big\rfloor,
\end{equation}
where $S$ denotes spatial resolution.

Finally, we size the INR by the target compression ratio $r$. With $P$ raw parameters, the base hidden width is controlled by $h = \max\big(32, \sqrt{P/(4r)}\big)$, and we use six hidden layers arranged as $[h, h, h, h/2, h/2, h/2]$.

\subsubsection{Frequency-Aware Loss Function}
Although pre-trained weights appear near-Gaussian distribution, the INR reconstructions may underfit higher frequencies. We therefore adopt a frequency-aware objective to encourage low- and high-frequency recovery. The total loss combines MSE, Gradient Difference (Grad)~\cite{mathieu2015deep}, and Focal Frequency (Freq)~\cite{jiang2021focal} terms:
\begin{equation}
    \mathcal{L}_{\text{total}} = \alpha \, \mathcal{L}_{\text{mse}} + \beta \, \mathcal{L}_{\text{grad}} + \psi \, \mathcal{L}_{\text{freq}},
    \label{eq:loss}
\end{equation}
with $\alpha{=}1,\; \beta{=}0.5,\; \psi{=}0.1$.

First, the Grad loss is primarily used in video prediction to reduce blurring at boundaries~\cite{mathieu2015deep}. For a 2D weight $\mathbf{w}$, discrete gradients along horizontal ($h$) and vertical ($v$) directions are
\begin{equation}
    \nabla_h \mathbf{w} = \mathbf{w}_{i, j+1} - \mathbf{w}_{i, j}, \quad \nabla_v \mathbf{w} = \mathbf{w}_{i+1, j} - \mathbf{w}_{i, j},
\end{equation}
and the gradient loss minimizes first-order differences:
\begin{equation}
    \mathcal{L}_{\text{grad}} = \big\| \nabla_h \hat{\mathbf{w}} - \nabla_h \mathbf{w} \big\|_2^2 + \big\| \nabla_v \hat{\mathbf{w}} - \nabla_v \mathbf{w} \big\|_2^2.
\end{equation}

Second, the Freq loss was first introduced in the image reconstruction task to address the high-frequency periodic patterns missing issue~\cite{jiang2021focal}. The frequency loss operates in the Fourier domain via a 2D FFT:
\begin{equation}
    \mathcal{L}_{\text{freq}} = \big\| \mathrm{FFT}(\hat{\mathbf{w}}) - \mathrm{FFT}(\mathbf{w}) \big\|_2^2.
\end{equation}
Optimizing in frequency space penalizes global high-frequency discrepancies that may appear as small local errors in the spatial domain.

\section{Experiments}
\subsection{Experimental Configuration} 

In this experiment, we compress some mainstream image recognition models, such as UNet~\cite{ronneberger2015u}, ResNet~\cite{he2015deep}, and Swin-Transformer~\cite{liu2021swin,liu2022swin}, and validate their performance on the image classification and segmentation tasks. These pretrained models are collected from the PyTorch Image Models (timm) repository~\cite{rw2019timm}. Several DFMC methods are utilized as the baselines. We use their default hyperparameter settings in the experiments.

For our method, we set the initial learning rate to $10^{-4}$ with the AdamW optimizer. The learning rate scheduler is CosineAnnealingLR and $T_{max}=200$. All INRs are executed 10,000 epochs to ensure convergence. To compare with other baselines, we compress the model by a ratio of 1.5 in 32-bit full-precision training and quantize it to lower bits.

\subsection{Model Compression Results}
\textbf{Image Classification on ImageNet~\cite{deng2009imagenet}}. Experimental results are presented in Table \ref{tab:img}. The Big2Small demonstrates competitive performance across different architectures when compared to other compression techniques. For ResNet18, Big2Small-6bit achieves a compression ratio of 5.9 with a model size of only 7.6 MB, outperforming DSG and Squant while maintaining a higher compression rate. In the ResNet50 experiments, Big2Small-8bit notably secures the highest Top-1 accuracy of 73.98\% among the compressed models, surpassing UDFC and RieM despite using a slightly lower compression ratio of 7.7. Furthermore, for Swin Transformer, our method also achieves the best accuracy in the same model size level. Compared to conventional quantization methods that only compress models to a fixed size, the Big2Small is more flexible. The original model can first be compressed into the format of 32bit and then quantized to a smaller size.

\begin{table}[ht]
\centering
\caption{Comparison of different compression methods for three notable image recognition networks on the ImageNet dataset. W/A means the numbers of weight/activation bits.}
\scalebox{0.85}{
\begin{tabular}{l|cccc}
\hline
                & Size (MB)                    & W/A Bits                      & Compression Ratio           & Top1 Acc (\%)                          \\ \hline
ResNet18~\cite{he2015deep}        & 44.6                         & 32/32                        & -                           & 71.47                                  \\
DSG~\cite{zhang2021diversifying}             & \cellcolor[HTML]{DAE8FC}8.4  & \cellcolor[HTML]{DAE8FC}6/6  & \cellcolor[HTML]{DAE8FC}5.3 & \cellcolor[HTML]{DAE8FC}70.46          \\
Squant~\cite{guo2022squant}          & \cellcolor[HTML]{DAE8FC}8.4  & \cellcolor[HTML]{DAE8FC}6/6  & \cellcolor[HTML]{DAE8FC}5.3 & \cellcolor[HTML]{DAE8FC}70.74          \\
UDFC~\cite{bai2023unified}            & \cellcolor[HTML]{DAE8FC}8.4  & \cellcolor[HTML]{DAE8FC}6/6  & \cellcolor[HTML]{DAE8FC}5.3 & \cellcolor[HTML]{DAE8FC}\textbf{72.76} \\
RieM~\cite{pei2024data}            & \cellcolor[HTML]{DAE8FC}8.4  & \cellcolor[HTML]{DAE8FC}8/16 & \cellcolor[HTML]{DAE8FC}5.3 & \cellcolor[HTML]{DAE8FC}71.80          \\
Big2Small-32bit & 30.4                         & 32/32                        & 1.5                         & 71.85                                  \\
Big2Small-16bit & 15.2                         & 16/16                        & 2.9                         & 71.54                                  \\
Big2Small-6bit  & \cellcolor[HTML]{DAE8FC}7.6  & \cellcolor[HTML]{DAE8FC}6/6  & \cellcolor[HTML]{DAE8FC}5.9 & \cellcolor[HTML]{DAE8FC}71.24          \\ \hline
ResNet50~\cite{he2015deep}        & 97.5                         & 32/32                        & -                           & 77.72                                  \\
GDFQ~\cite{xu2020generative}            & \cellcolor[HTML]{DAE8FC}12.3 & \cellcolor[HTML]{DAE8FC}4/4  & \cellcolor[HTML]{DAE8FC}7.9 & \cellcolor[HTML]{DAE8FC}55.65          \\
Squant~\cite{guo2022squant}           & \cellcolor[HTML]{DAE8FC}12.3 & \cellcolor[HTML]{DAE8FC}4/4  & \cellcolor[HTML]{DAE8FC}7.9 & \cellcolor[HTML]{DAE8FC}70.80          \\
UDFC~\cite{bai2023unified}            & \cellcolor[HTML]{DAE8FC}12.3 & \cellcolor[HTML]{DAE8FC}4/4  & \cellcolor[HTML]{DAE8FC}7.9 & \cellcolor[HTML]{DAE8FC}72.09          \\
RieM~\cite{pei2024data}            & \cellcolor[HTML]{DAE8FC}12.3 & \cellcolor[HTML]{DAE8FC}4/4  & \cellcolor[HTML]{DAE8FC}7.9 & \cellcolor[HTML]{DAE8FC}73.26          \\
Big2Small-32bit & 50.3                         & 32/32                        & 1.9                         & 76.54                                  \\
Big2Small-16bit & 25.2                         & 16/16                        & 3.9                         & 74.33                                  \\
Big2Small-8bit  & \cellcolor[HTML]{DAE8FC}12.6 & \cellcolor[HTML]{DAE8FC}8/8  & \cellcolor[HTML]{DAE8FC}7.7 & \cellcolor[HTML]{DAE8FC}\textbf{73.98} \\ \hline
Swin-T~\cite{liu2022swin}          & 116.0                        & 32/32                        & -                           & 81.35                                  \\
PSAQ-ViT~\cite{li2022patch}        & \cellcolor[HTML]{DAE8FC}14.5 & \cellcolor[HTML]{DAE8FC}4/8  & \cellcolor[HTML]{DAE8FC}8.0 & \cellcolor[HTML]{DAE8FC}71.79          \\
PSAQ-ViT V2\cite{li2023psaq}     & \cellcolor[HTML]{DAE8FC}14.5 & \cellcolor[HTML]{DAE8FC}4/8  & \cellcolor[HTML]{DAE8FC}8.0 & \cellcolor[HTML]{DAE8FC}76.28          \\
RieM~\cite{pei2024data}            & \cellcolor[HTML]{DAE8FC}14.5 & \cellcolor[HTML]{DAE8FC}4/8  & \cellcolor[HTML]{DAE8FC}8.0 & \cellcolor[HTML]{DAE8FC}76.30          \\
PSAQ-ViT V2~\cite{li2023psaq}     & 29.0                         & 8/8                          & 4.0                         & 80.21                                  \\
RieM~\cite{pei2024data}            & 29.0                         & 8/8                          & 4.0                         & 80.85                                  \\
Big2Small-32bit & 77.3                         & 32/32                        & 1.5                         & 81.02                                  \\
Big2Small-16bit & 38.6                         & 16/16                        & 3.0                         & 80.88                                  \\
Big2Small-6bit  & \cellcolor[HTML]{DAE8FC}14.6 & \cellcolor[HTML]{DAE8FC}6/6  & \cellcolor[HTML]{DAE8FC}7.9 & \cellcolor[HTML]{DAE8FC}\textbf{77.25} \\ \hline
Swin-S~\cite{liu2022swin}          & 200.0                        & 32/32                        & -                           & 83.20                                  \\
PSAQ-ViT~\cite{li2022patch}        & \cellcolor[HTML]{DAE8FC}25.0 & \cellcolor[HTML]{DAE8FC}4/8  & \cellcolor[HTML]{DAE8FC}8.0 & \cellcolor[HTML]{DAE8FC}75.14          \\
PSAQ-ViT V2~\cite{li2023psaq}     & \cellcolor[HTML]{DAE8FC}25.0 & \cellcolor[HTML]{DAE8FC}4/8  & \cellcolor[HTML]{DAE8FC}8.0 & \cellcolor[HTML]{DAE8FC}78.86          \\
RieM~\cite{pei2024data}            & \cellcolor[HTML]{DAE8FC}25.0 & \cellcolor[HTML]{DAE8FC}4/8  & \cellcolor[HTML]{DAE8FC}8.0 & \cellcolor[HTML]{DAE8FC}79.84          \\
PSAQ-ViT V2~\cite{li2023psaq}     & 50.0                         & 8/8                          & 4.0                         & 82.13                                  \\
RieM~\cite{pei2024data}            & 50.0                         & 8/8                          & 4.0                         & 82.96                                  \\
Big2Small-32bit & 142.9                        & 32/32                        & 1.4                         & 83.04                                  \\
Big2Small-16bit & 71.5                         & 16/16                        & 2.8                         & 82.88                                  \\
Big2Small-6bit  & \cellcolor[HTML]{DAE8FC}26.8 & \cellcolor[HTML]{DAE8FC}6/6  & \cellcolor[HTML]{DAE8FC}7.5 & \cellcolor[HTML]{DAE8FC}\textbf{80.35} \\ \hline
\end{tabular}}
\label{tab:img}
\end{table}

\begin{figure*}[ht]
    \centering
    \includegraphics{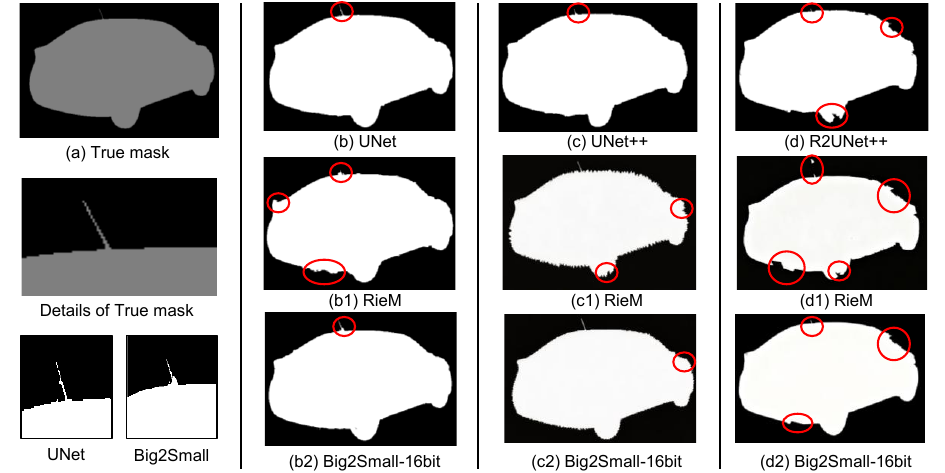}
    \caption{Visualizing segmentation results of the original UNet models and the compressed ones.}
    \label{fig:car}
\end{figure*}

\textbf{Image Segmentation on Carvana}. This dataset is obtained from the Carvana Image Masking Challenge~\footnote{\url{https://www.kaggle.com/competitions/carvana-image-masking-challenge/overview}}. The Carvana company collected images of 318 unique vehicles, each with 16 standard rotating images of a high resolution 1,918 $\times$ 1,080. The image segmentation results are illustrated in Table~\ref{tab:car}. We apply UNet, UNet++, and R2UNet as foundation models for compression and adopt RieM as the baseline method. The proposed Big2Small method demonstrates superior performance trade-offs compared to the RieM baseline. Specifically, the 32-bit Big2Small variant consistently achieves the best performance among the compressed models, effectively preserving the predictive capability of the original networks while offering substantial size reduction. For instance, in the UNet configuration, Big2Small-32bit reduces the model size to 22.9 MB while maintaining a competitive mIOU of 95.32\%, outperforming the RieM method which yields 94.67\% mIOU with 25.8MB. Furthermore, the 16-bit Big2Small implementation offers even greater compression rates—reducing the UNet++ model to just 3.1 MB—while frequently surpassing or matching the accuracy metrics of the corresponding 16-bit RieM counterparts.

\begin{table}[ht]
\centering
\caption{Comparison of different compression methods on the Carvana dataset. Where W/A means the numbers of weight/activation bits.}
\begin{tabular}{lcccc}
\hline
                & Size (MB)                     & W/A bits                      & mACC(\%)                               & mIOU(\%)                               \\ \hline
UNet~\cite{ronneberger2015u}            & 51.5                         & 32/32                         & 99.52                                  & 95.87                                  \\
RieM~\cite{pei2024data}            & \cellcolor[HTML]{DAE8FC}25.8 & \cellcolor[HTML]{DAE8FC}16/16 & \cellcolor[HTML]{DAE8FC}98.52          & \cellcolor[HTML]{DAE8FC}94.67          \\
Big2Small-32bit & \cellcolor[HTML]{DAE8FC}22.9 & \cellcolor[HTML]{DAE8FC}32/32 & \cellcolor[HTML]{DAE8FC}\textbf{99.07} & \cellcolor[HTML]{DAE8FC}\textbf{95.32} \\
Big2Small-16bit & \cellcolor[HTML]{DAE8FC}11.5 & \cellcolor[HTML]{DAE8FC}16/16 & \cellcolor[HTML]{DAE8FC}98.75          & \cellcolor[HTML]{DAE8FC}94.86          \\ \hline
UNet++~\cite{zhou2018unet++}          & 9.2                          & 32/32                         & 99.01                                  & 95.62                                  \\
RieM~\cite{pei2024data}            & \cellcolor[HTML]{DAE8FC}4.6  & \cellcolor[HTML]{DAE8FC}16/16 & \cellcolor[HTML]{DAE8FC}97.15          & \cellcolor[HTML]{DAE8FC}90.35          \\
Big2Small-32bit & \cellcolor[HTML]{DAE8FC}6.1  & \cellcolor[HTML]{DAE8FC}32/32 & \cellcolor[HTML]{DAE8FC}\textbf{98.01} & \cellcolor[HTML]{DAE8FC}\textbf{93.24} \\
Big2Small-16bit & \cellcolor[HTML]{DAE8FC}3.1  & \cellcolor[HTML]{DAE8FC}16/16 & \cellcolor[HTML]{DAE8FC}95.11          & \cellcolor[HTML]{DAE8FC}92.74          \\ \hline
R2UNet~\cite{alom2018recurrent}          & 39.1                         & 32/32                         & 99.08                                  & 95.41                                  \\
RieM~\cite{pei2024data}            & \cellcolor[HTML]{DAE8FC}19.6 & \cellcolor[HTML]{DAE8FC}16/16 & \cellcolor[HTML]{DAE8FC}97.12          & \cellcolor[HTML]{DAE8FC}93.53          \\
Big2Small-32bit & \cellcolor[HTML]{DAE8FC}26.1 & \cellcolor[HTML]{DAE8FC}32/32 & \cellcolor[HTML]{DAE8FC}\textbf{97.95} & \cellcolor[HTML]{DAE8FC}\textbf{94.35} \\
Big2Small-16bit & \cellcolor[HTML]{DAE8FC}13.0 & \cellcolor[HTML]{DAE8FC}16/16 & \cellcolor[HTML]{DAE8FC}96.12          & \cellcolor[HTML]{DAE8FC}94.59          \\ \hline
\end{tabular}
\label{tab:car}
\end{table}

Furthermore, we visualize the segmentation results of different methods in Figure~\ref{fig:car}. The antenna presents a challenging feature to segment clearly. While the baseline method RieM fails to capture this structure entirely, \textit{Big2Small} successfully detects this fine detail, closely matching the performance of the raw UNet, albeit with minor noise. This highlights the superior reconstruction fidelity of our method in image segmentation tasks over RieM.

\subsection{Analysis Experiments}
\textbf{Weight Reconstruction Evaluation}. Here, we compare the weight reconstruction quality with RieM~\cite{pei2024data}. In this experiment, we adopt a convolutional weight of ResNet50 to train \textit{Big2Small} and RieM models. The reconstruction quality is evaluated by weight distributions and Quantile-Quantile (Q–Q) plots. Specifically, Q-Q plots compare two probability distributions by plotting their quantiles against one another, offering a precise visual tool to detect deviations like skewness or heavy tails. Mathematically, for a set of $n$ sorted observations $x_1 \le x_2 \le \dots \le x_n$, the theoretical quantile $q_i$ of a standard normal distribution is approximated via the inverse cumulative distribution function (CDF) $\Phi^{-1}$:
\begin{equation}
    q_i = \Phi^{-1}\left(\frac{i - 0.5}{n}\right),
\end{equation}
where every point on the graph represents a pair $(q_i, x_i)$.

The results is illustrated in Figure \ref{fig:weight-re}. First, for the weight distribution, the \textit{Big2Small} accurately preserves the shape, peak, and spread of the original weight distribution. In contrast, the RieM curve appears to deviate from the original density. Second, in the Q-Q plot, while the RieM exhibits a visible mismatch in lower and higher frequencies, the \textit{Big2Small} data points align tightly with the “Perfect Match" diagonal line. This indicates that \textit{Big2Small} successfully maintains the statistical properties of the original weight distribution across the entire range of values, ensuring a high-fidelity reconstruction that the baseline method fails to achieve.

Moreover, we compare the performance of two methods in varying compression ratios. The MSE and cosine similarity are utilized for quantitative evaluation, as depicted in Figure~\ref{fig:weight-comprare}. \textit{Big2Small} consistently outperforms the baseline across all tested compression levels. In terms of the reconstruction error, \textit{Big2Small} maintains significantly lower MAE values, indicating a higher precision in approximating the original weight magnitudes. Complementing this, the Cosine Similarity analysis shows that \textit{Big2Small} achieves scores closer to 1.0, evidencing superior preservation of the weight vectors' directional properties.

\begin{figure}[ht]
    \centering
    \includegraphics{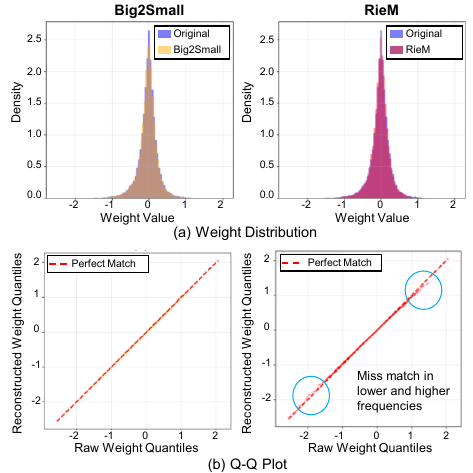}
    \caption{Comparison of weight reconstruction qualities on the convolutional layer of ResNet50.}
    \label{fig:weight-re}
    \vspace{-0.4cm}
\end{figure}

\begin{figure}[ht]
    \centering
    \includegraphics{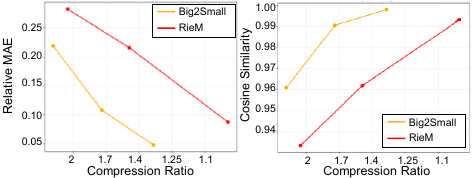}
    \caption{Reconstruction error vs compression ratios on the convolutional layer of ResNet50.}
    \label{fig:weight-comprare}
\end{figure}

\textbf{Ablation Study}. We validate the proposed strategies of training INRs in Table \ref{tab:ablation}. Clearly, superior performance is achieved across both convolutional and linear architectures by using outlier-aware preprocessing and frequency-aware loss. After preprocessing, \textit{Big2Small} shows significant improvements in terms of MAE, cosine similarity, and PSNR.  Similarly, our loss function yields the best results across all metrics, surpassing standard MSE, MSE+Grad, and MSE+Freq variations, confirming the efficacy of the proposed technique.

\begin{table}[ht]
\centering
    \caption{Ablation study of the Big2Small using ResNet50. ``w/o pr'' denotes without using our preprocessing strategy; ``Rel MAE'' is relative MAE; ``Cosine'' represents cosine similarity.}
\begin{tabular}{ll|ccc}
\hline
Layer                         & Strategy                         & Rel MAE                                 & Cosine                                 & PSNR                                   \\ \hline
                              & \cellcolor[HTML]{DAE8FC}w/o pre  & \cellcolor[HTML]{DAE8FC}0.0832          & \cellcolor[HTML]{DAE8FC}0.953          & \cellcolor[HTML]{DAE8FC}29.32          \\
                              & \cellcolor[HTML]{DAE8FC}pre      & \cellcolor[HTML]{DAE8FC}\textbf{0.0085} & \cellcolor[HTML]{DAE8FC}\textbf{1.000} & \cellcolor[HTML]{DAE8FC}\textbf{54.21} \\
                              & \cellcolor[HTML]{FFFFC7}MSE      & \cellcolor[HTML]{FFFFC7}0.0672          & \cellcolor[HTML]{FFFFC7}0.996          & \cellcolor[HTML]{FFFFC7}34.29          \\
                              & \cellcolor[HTML]{FFFFC7}MSE+Grad & \cellcolor[HTML]{FFFFC7}0.0421          & \cellcolor[HTML]{FFFFC7}0.996          & \cellcolor[HTML]{FFFFC7}35.21          \\
                              & \cellcolor[HTML]{FFFFC7}MSE+Freq & \cellcolor[HTML]{FFFFC7}0.0351          & \cellcolor[HTML]{FFFFC7}0.996          & \cellcolor[HTML]{FFFFC7}35.52          \\
\multirow{-6}{*}{Convolution} & \cellcolor[HTML]{FFFFC7}Ours     & \cellcolor[HTML]{FFFFC7}\textbf{0.0085} & \cellcolor[HTML]{FFFFC7}\textbf{1.000} & \cellcolor[HTML]{FFFFC7}\textbf{54.21} \\ \hline
                              & \cellcolor[HTML]{DAE8FC}w/o pre  & \cellcolor[HTML]{DAE8FC}0.0927          & \cellcolor[HTML]{DAE8FC}0.923          & \cellcolor[HTML]{DAE8FC}24.38          \\
                              & \cellcolor[HTML]{DAE8FC}pre      & \cellcolor[HTML]{DAE8FC}\textbf{0.0496} & \cellcolor[HTML]{DAE8FC}\textbf{0.999} & \cellcolor[HTML]{DAE8FC}\textbf{37.63} \\
                              & \cellcolor[HTML]{FFFFC7}MSE      & \cellcolor[HTML]{FFFFC7}0.1032          & \cellcolor[HTML]{FFFFC7}0.906          & \cellcolor[HTML]{FFFFC7}25.32          \\
                              & \cellcolor[HTML]{FFFFC7}MSE+Grad & \cellcolor[HTML]{FFFFC7}0.0785          & \cellcolor[HTML]{FFFFC7}0.981          & \cellcolor[HTML]{FFFFC7}29.26          \\
                              & \cellcolor[HTML]{FFFFC7}MSE+Freq & \cellcolor[HTML]{FFFFC7}0.0674          & \cellcolor[HTML]{FFFFC7}0.983          & \cellcolor[HTML]{FFFFC7}30.27          \\
\multirow{-6}{*}{Linear}      & \cellcolor[HTML]{FFFFC7}Ours     & \cellcolor[HTML]{FFFFC7}\textbf{0.0496} & \cellcolor[HTML]{FFFFC7}\textbf{0.998} & \cellcolor[HTML]{FFFFC7}\textbf{37.63} \\ \hline
\end{tabular}
\label{tab:ablation}
\end{table}

Lastly, to illustrate the effect of the loss function, Figure~\ref{fig:qq-comprare} compares reconstructions trained with pure MSE versus the proposed frequency-aware loss using Q-Q plots. The results demonstrate that our method achieves superior distributional alignment, as evidenced by data points following the diagonal ``Perfect Match" line $(y=x)$ more closely than the MSE Loss.


\begin{figure}[ht]
    \centering
    \includegraphics[width=\linewidth]{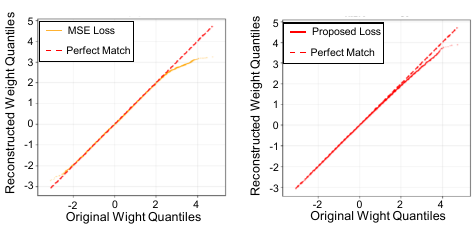}
    \caption{The Q-Q plot of using different loss functions to reconstruct the weight of the linear layer.}
    \label{fig:qq-comprare}
\end{figure}

\textbf{Combination with Other Compression Methods}. Since \textit{Big2Small} operates as a neural network architecture, its efficiency can be further enhanced by integrating orthogonal model compression techniques. To evaluate this potential, we applied Low-Rank Decomposition (SVD), Pruning (P) with 60\% pruning rate, and Quantization (Q) to the \textit{Big2Small} framework. The results, summarized in Table \ref{tab:b2s}, demonstrate that \textit{Big2Small} is highly compatible with traditional compression methods, achieving significantly higher compression ratios with minimal performance degradation. Notably, 6-bit Quantization (Big2Small+Q) emerged as the most effective combination, achieving a 5.9$\times$ compression ratio while maintaining a competitive 71.24\% accuracy. While SVD and pruning also show substantial size reductions, they incur slightly higher accuracy trade-offs. These preliminary findings highlight the extensibility of \textit{Big2Small}, and we intend to explore optimized hybrid compression strategies in future work.

\begin{table}[ht]
\centering
\caption{Compressing Big2Small on the ImageNet dataset. Where P denotes pruning and Q denotes quantization.}
\begin{tabular}{l|ccc}
\hline
                   & \multicolumn{1}{l}{Size (MB)} & \multicolumn{1}{l}{Compression Ratio} & \multicolumn{1}{l}{Top1 Acc (\%)} \\ \hline
ResNet18           & 44.59                         &     -                                  & 71.47                             \\
Big2Small          & 30.4                          & 1.5                                   & 71.85                             \\
Big2Small+SVD      & 10.32                         & 2.9                                   & 70.32                             \\
Big2Small+P (60\%) & 9.12                          & 4.9                                   & 69.82                             \\
Big2Small+Q (6bit) & 7.59                          & 5.9                                   & 71.24                             
\\ \hline
\end{tabular}
\label{tab:b2s}
\end{table}

\textbf{Inference Time}. We evaluate the inference latency of \textit{Big2Small} against the original model on the ImageNet dataset using an NVIDIA RTX PRO 6000 GPU. The results are illustrated in Figure~\ref{fig:tp}. During the inference stage, \textit{Big2Small} must employ all INRs to reconstruct the model weights. Consequently, an increase in computational complexity and time is inevitable. \textit{Big2Small} increases inference latency by approximately 30\% compared to the raw model. However, this overhead remains acceptable for non-real-time scenarios where storage efficiency is the primary constraint, such as deployments on edge device.


\begin{figure}[ht]
    \centering
    \includegraphics[width=\linewidth]{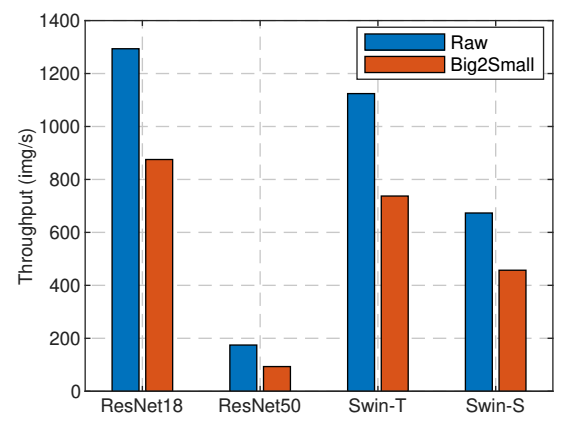}
    \caption{The throughput of raw and Big2Small models on the ImageNet test set. }
    \label{fig:tp}
\end{figure}

\textbf{Comparison with Other Encoders}. \textit{Big2Small} serves as a general framework capable of integrating various encoders and generative models. In this experiment, we evaluate the suitability of different models, such as SIREN~\cite{sitzmann2020implicit}, VAE~\cite{kingmaauto}, and GAN~\cite{goodfellow2020generative}, by reconstructing a convolutional weight tensor from ResNet-50. As shown in Figure~\ref{fig:encoder}, our method outperforms these baselines. Notably, GANs are relatively unsuitable for this task, exhibiting a significantly high relative MAE. While VAE and SIREN yield competitive results, they remain inferior to our approach. This indicates that our specific INR structure possesses superior representational capacity, making it the optimal choice for DFMC.

\begin{figure}[ht]
    \centering
    \includegraphics{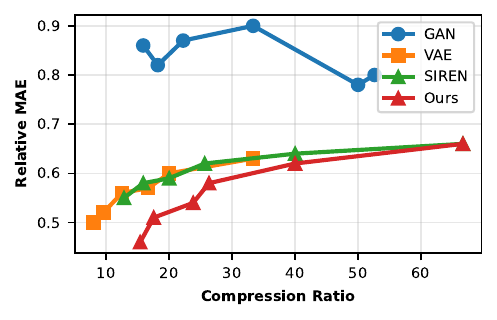}
    \caption{Comparison of different encoders for compressing a convolution weight of ResNet50.}
    \label{fig:encoder}
\end{figure}

\section{Conclusion}

In this paper, we have established a comprehensive theoretical framework that unifies various post-training model compression techniques and redefines them as neural function approximation problems. Building upon this mathematical and structural equivalence, we have introduced \textit{Big2Small}, a novel data-free model compression (DFMC) paradigm that utilizes Implicit Neural Representations (INRs) to compress large deep learning models. By treating model weights as discrete data, we have demonstrated that small neural networks can effectively encode the parameters of larger architectures without requiring access to original or synthetic data. Experimental validations have confirmed that \textit{Big2Small} achieves highly competitive performance and efficiency across multiple architectures. Future work will focus on optimizing the training efficiency and exploring higher-capacity representation networks, such as polynomial networks, to further improve reconstruction performance.



\bibliographystyle{ieeetr}
\bibliography{ref.bib}





\end{document}